\newtheorem{thm}{Theorem}
\newtheorem{cor}{Corollary}
\newtheorem{rem}{Remark}
\newtheorem{problem}{Problem}
\newcommand{\changes}[1]{\textcolor{black}{#1}}
\newcommand{\newchanges}[1]{\textcolor{black}{#1}}
\title{\LARGE \bf
Meta-Learning Augmented MPC for Disturbance-Aware Motion Planning and Control of Quadrotors%
}
\author{Dženan Lapandić$^{1}$, Fengze Xie$^{2}$, Christos K. Verginis$^{3}$, \\ Soon-Jo Chung$^{2}$, Dimos V. Dimarogonas$^{1}$ and Bo Wahlberg$^{1}$%
\thanks{*This work was supported by the Wallenberg AI, Autonomous Systems and Software Program (WASP), the Swedish Research Council and Knut and Alice Wallenberg Foundation (KAW).}%
\thanks{$^{1}$Dženan Lapandić, Dimos V. Dimarogonas and Bo Wahlberg are with  Division of Decision and Control Systems, KTH Royal Institute of Technology, Stockholm, Sweden.   {\tt\small lapandic,dimos,bo@kth.se}}%
\thanks{$^{2}$Fengze Xie and Soon-Jo Chung are with Division of EAS, Caltech, Pasadena, CA, USA   {\tt\small fxxie,sjchung@caltech.edu}}%
\thanks{$^{3}$Christos K. Verginis is with Division of Signals and Systems, Department of Electrical Engineering, Uppsala University,Uppsala, Sweden. {\tt\small christos.verginis@angstrom.uu.se}}%
}
\begin{document}

\maketitle
\thispagestyle{empty}
\pagestyle{empty}

\begin{abstract}

A major challenge in autonomous flights is unknown disturbances, which can jeopardize safety and cause collisions, especially in obstacle-rich environments.
This paper presents a disturbance-aware motion planning and control framework for autonomous aerial flights.
The framework is composed of two key components: a disturbance-aware motion planner and a tracking controller.
The motion planner consists of a predictive control scheme and an \changes{online-adapted} learned disturbance model.
The tracking controller\changes{, developed} using contraction control \changes{methods, ensures} safety bounds on the \changes{quadrotor's behavior near} obstacles with respect to the motion plan.
The algorithm is tested in simulations with a quadrotor facing strong crosswind and ground-induced disturbances.

\end{abstract}

\section{Introduction}
\begin{comment}
In the pursuit of enhancing the autonomy and efficiency of unmanned aerial vehicles (UAVs), the need for safe autonomous landing in harsh environmental conditions has emerged as a research challenge. This capability is relevant in various domains and applications, including air mobility, search and rescue, and drone delivery \cite{bauranov2021designing,rajabi2021drone,lyu2023unmanned}. 
Developing a robust algorithm for autonomous quadrotor landing is challenging due to the disturbances and safety-critical constraints when operating near obstacles. 
Thus, the planning and control algorithms should consider the disturbances and their effects on a UAV near these constraints.

Rotor-based aircraft operating close to the ground experience increased thrust for a given power due to the reduced downwash induced by a rotor.
This effect is known as the ground effect and has been documented first for helicopters in \cite{cheeseman1955effect}. Ground effects are challenging to model and, if overlooked, can lead to significant safety concerns.
Therefore, identifying or learning such a disturbance model can help a UAV to have smoother landings and increase safety.
The ground effect disturbances are a kind of interaction-produced disturbances that can also be modelled with a neural network that, as input, takes the relative position from the ground surface \cite{shi2019neural}.
Furthermore, including the disturbance model in the motion planner enables the UAV to find and execute the optimal trajectory considering the ground effect disturbances.
\end{comment}
Enhancing the autonomy of unmanned aerial vehicles (UAVs) \changes{has made} safe autonomous landing in harsh environments a \changes{key} research challenge. This capability is relevant in various domains and applications, including air mobility, search and rescue, and drone delivery \cite{bauranov2021designing,rajabi2021drone,lyu2023unmanned}.  
Developing robust quadrotor landing algorithms is challenging due to disturbances and safety-critical constraints near obstacles. 
Therefore, planning and control algorithms \changes{must account for} these disturbances and their effects on UAV \changes{performance}.

Rotor-based aircraft experience increased thrust near the ground due to reduced downwash, known as the ground effect, first documented for helicopters in \cite{cheeseman1955effect}. 
\changes{Modeling ground effects is complex, and neglecting them can pose significant} safety risks.
Accurately identifying or learning these disturbance models can lead to smoother, safer UAV landings. 
Ground effect disturbances, a type of interaction-produced disturbance, can be modeled using neural networks that take the relative position of the UAV from the ground as input \cite{shi2019neural}. 
\changes{Incorporating such models into motion planning enables UAVs} to find and execute optimal trajectories \changes{that account} for ground effect disturbances.

In this paper, we focus on augmenting the nominal quadrotor dynamics with a learned disturbance model inside the MPC scheme for predicting future behaviour. 
The disturbance model is acquired and refined through meta-learning, \changes{defined here} as an online adaptation of the offline\changes{-trained} disturbance model to the observed environmental conditions.
Our meta-learning \changes{approach} leverages deep neural networks, with the final layer dynamically adapted online through adaptive control mechanisms \cite{aastrom2013adaptive}. 
Since the application is safety-critical, \changes{collecting} data to learn the representation must be done efficiently and safely. 
\changes{During each MPC iteration}, the meta-learning algorithm updates the parameter estimates and covariance matrix, which are then applied in the next prediction step to \changes{refine} the disturbance model.
A low-level contraction-based controller \changes{complements} the feedforward MPC control action, \changes{ensuring} convergence to the planned trajectory.
Contraction theory \cite{lohmiller1998contraction} provides performance bounds \textit{a priori}, with respect to the desired trajectory.
The planner can utilize this information to guarantee collision-free behaviour near obstacles.
Thus, the proposed framework achieves disturbance-aware planning and control with theoretical \changes{safety} guarantees.

\begin{figure}[t!]
\centering
  \includegraphics[width=0.75\columnwidth]{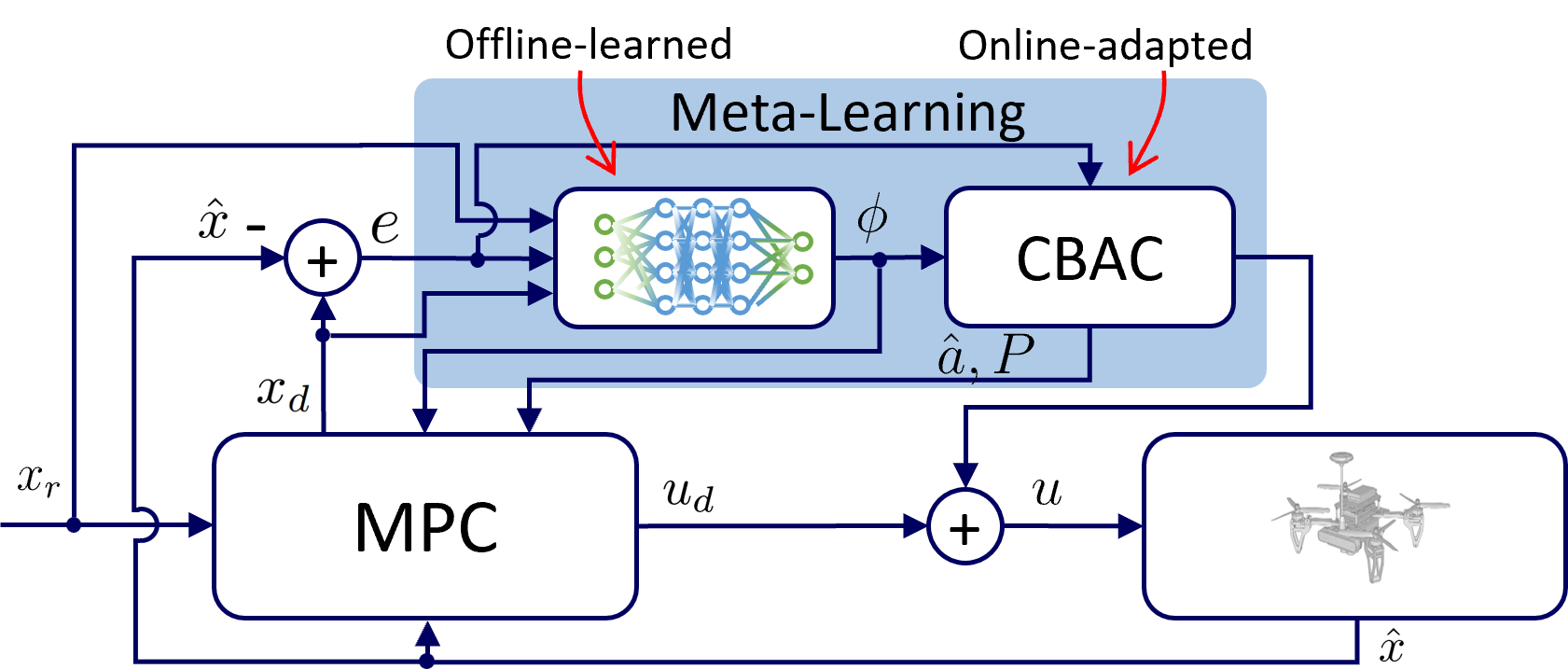}
\caption{Block diagram of the proposed disturbance-aware motion planning and control algorithm.}
\label{fig:diagram}\vspace{-0.75cm}
\end{figure}

\setlength{\voffset}{0.2cm}
The main contributions of this paper are 
{
\renewcommand{\theenumi}{\roman{enumi}}
\begin{enumerate}
    \item the augmentation of the model predictive control (MPC) with the learned model of disturbances within the proposed meta-learning framework,
    \item stability guarantees for the models with approximated state-dependent coefficients, 
    \item theoretical considerations on chance-constrained upper bounds for safety.
\end{enumerate}
}
\subsection{Related Work}
Recent \changes{research} on quadrotor control and disturbance handling \changes{often} uses optimization-based controllers \changes{like} model predictive control (MPC) with \changes{adaptive methods} for model mismatch \cite{hanover2021performance}. 
In \cite{torrente2021data}, \changes{Gaussian processes model} aerodynamic effects and propagate corrected dynamics \changes{within} MPC, outperforming \changes{linear compensation approaches} \cite{faessler2017differential}. 
\changes{Disturbances are categorized} as matched and unmatched in \cite{sinha2022adaptive}, with online estimation and adaptation countering matched disturbances \changes{while ensuring} unmatched ones remain bounded. 
Ground-effect disturbances \cite{cheeseman1955effect,khromov2008ground,he2017modeling,danjun2015autonomous} \changes{have been effectively represented and cancelled} using \changes{spectrally-normalized deep neural networks (DNNs)} \cite{shi2019neural}. 
The characteristics of interaction-produced disturbances have been studied in \cite{shi2020neuralswarm,shi2021neuralswarm2}.
In \cite{oconnel2022neural, 10611562}, authors use meta-learning to combine offline learning and online adaptation to cancel the wind disturbances represented by a learned deep neural network and a set of linear coefficients adapted online for the current wind conditions.
Such a decomposition effectively represents the unknown wind dynamics. Furthermore, meta-learning approaches have been explored in \cite{richards2021adaptive,harrison2020meta,shi2021meta,mckinnon2021meta, verginis2023non, 10706036}.

Note that a similar study on using interaction-aware disturbances for motion planning has been conducted in \cite{shi2021neuralswarm2}. However, the main differences to this work are that we now consider the nonlinear model of quadrotor dynamics instead of double-integrator dynamics and use the control inputs provided as the outcome of the optimal control problem as a feedforward input to the low-level controller.

The paper is organized as follows. The problem formulation is given in Sec.~\ref{sec:prob_form}, and the meta-learning augmented MPC is presented in Sec.~\ref{sec:mlocp}. Finally, Sec.~\ref{sec:results} presents the simulation results, and Sec.~\ref{sec:conclusion} concludes the paper.
\subsection{Notation}
For a matrix $A\in\mathbb{R}^{n\times n}$, $A\succ0$ denotes that $A$ is positive definite, \newchanges{$A\succeq0$ that $A$ is positive semi-definite,} and $\lambda_{\textup{min}}(A)$ and $\lambda_{\textup{max}}(A)$ are minimal and maximal eigenvalues of $A$, respectively.
\newchanges{Also, $2\textup{sym}(A) = A + A^T$.}

\section{Problem Formulation}\label{sec:prob_form}
We consider the quadrotor UAV model:
\begin{subequations}\label{eq:system}
\begin{align}
    \dot{p} &= v,\\
    m\dot{v} &=R ({\eta})f_T + f_{dist} - mg,\label{eq:1b} \\
    \dot{\eta} &= R_T ({\eta})\omega,\\
    J \dot{\omega} &=  J \omega\times \omega + \tau_{dist} + \tau_u \label{eq:1d}
\end{align}%
\end{subequations}%
where $x = [ p^T, v^T, \eta^T, \omega^T]^T \in \mathcal{X}=\mathbb{R}^{12}$, $p \in \mathbb{R}^3$ is the position in the inertial frame, $v \in \mathbb{R}^3$ is the linear velocity, $\eta = [\phi,\theta,\psi]^T \in \mathbb{T} = (-\frac{\pi}{2},\frac{\pi}{2})\times(-\frac{\pi}{2},\frac{\pi}{2})\times (-\pi,\pi)$ is the vector of Euler angles representing the attitude (roll, pitch, yaw angles), and $\omega = [\omega_\phi,\omega_\theta,\omega_\psi]^T$ is the angular velocity, expressed in the body frame; 
$f_{T} = [0,0,f_u]^T$ is the controlled thrust, and $\tau_u$ is the controlled torque;
$R: \mathbb{T} \to SO(3)$ is the rotation matrix from the body to the inertial frame, and $SO(3)$ is the special orthonormal group %
in 3D and $I_n \in \mathbb{R}^{n\times n}$ is the identity matrix; 
Furthermore, $R_{T}:\mathbb{T}\to\mathbb{R}^{3\times 3}$ is the mapping from the angular velocity to the time derivatives of the Euler angles. 
Wind disturbances, ground effects, unmodeled aerodynamic forces, and \changes{other external disturbances are captured by $f_{dist}$. Since the disturbance torque $\tau_{dist}$ is two orders of magnitude smaller than other feedback terms in a standard attitude controller loop \cite{shi2021neuralswarm2}, we focus on learning $f_{dist}$ alone.}
Finally, $m$ and $J$ are the mass and positive definite inertia matrix of the UAV.

Because of the assumption on additive disturbances as in $\eqref{eq:1b}$, the quadrotor model can be rewritten in the following form
\begin{equation}\label{eq:system_input_affine}
    \dot{x} = f(x) + B(x)u + f_d
\end{equation}
\changes{where $f_d=[0^T,f_{dist}^T,0^T,0^T]^T$ because we assume $\tau_{dist}~\approx~0$}.
We consider the quadrotor tasked with landing on a moving platform with the reference state $x_r(t)$ being derived from the motion of the platform.
\begin{problem}
    Given the state estimate of the moving landing platform $x_r(t)\in \mathcal{X}$, $t \geq 0$, derive a trajectory $x:[0,t_f]\rightarrow \mathcal{X}$ and a control input $u=[f_u,\tau_u^T]^T$, with control thrust $f_u$ and torque $\tau_u$
    \newchanges{such} that the state of quadrotor $x(t_f)$ at $t_f$ is in the goal region $\mathcal{X}_{\textup{goal}}(x_r(t_f))$ determined by the state of the landing platform and state and control constraints $x\in \mathcal{X}$, $u\in \mathcal{U}$ \changes{are satisfied} for all $t$.
\end{problem}

\section{Meta-learning Augmented MPC}\label{sec:mlocp}
The interaction-produced disturbances, mainly generated by the ground effect, can also be modelled with a neural network that, as input, takes the relative position from the ground surface \cite{shi2019neural}.
In this work, we introduce a disturbance model that incorporates the position of the targeted ground surface, setting it apart from the approach in \cite{oconnel2022neural}
\begin{equation}
    f_d(x,x_r,w) \approx \phi(x,x_r,\Theta)a(w) \label{eq:f_d_modelguess}
\end{equation}
where \changes{$a(w)$ is an unknown parameter that depends on $w \in \mathbb{R}^m$ which is} an unknown hidden state representing the underlying environmental conditions and can also be time-varying, and $\phi$ is a neural network with parameters $\Theta$. \changes{Note that $\phi$ has non-zero components only in the linear velocity term \eqref{eq:1b}, so our $\phi = [0, \phi_v, 0]$.}
The function $\phi$ constitutes a basis function in the meta-learning approach that is invariant to the specific environment conditions. 
\changes{The specific conditions are tackled online by dynamically adapting the parameter $a$ in the disturbance model which, practically,  can be seen as changing the final layer of the learned neural network model.}

\subsection{Neural Network Model of Disturbances}\label{ssec:nn_model_of_disturbances}

The first stage is an offline training of a neural network based on the synthetically generated dataset $\mathcal{D}_{\textup{meta}}=\{D_1,...,D_M \}$ consisting of $M$ different environmental conditions subsets  \newchanges{$D_i$ with $N_k$ samples}
\begin{equation}\label{eq:D_i_dataset}
    D_i = \left \{ x_k^{(i)}, y_k^{(i)} = f_k(x_k^{(i)},u_k^{(i)}) + \epsilon_k^{(i)}, x_{r,k}^{(i)}  \right \}_{k=1}^{N_k}
\end{equation}
where $\epsilon_k^{(i)}$ %
is the residual obtained by capturing the discrepancy between the discretized known model dynamics $f_k(x_k^{(i)},u_k^{(i)})$ of \eqref{eq:system_input_affine} and the measured dynamical state.
The Deep Neural Network (DNN) model is based on \changes{$L$} fully connected layers with element-wise ReLU activation function $g(\cdot)=\max (\cdot, 0)$ and the DNN weights $\Theta =\{ W^1,...,W^{L+1}\}$
\begin{equation}
    \phi(x,x_r,\Theta) =  W^{L+1}g(W^L g(\cdots g(W^1 [x^T,x_r^T]^T) \cdots )  )
\end{equation}
Thus, the meta-learning model of the disturbances is 
\begin{equation}
    \hat{f}_d({x}(t),x_r(t)) = \phi({x}(t),x_r(t),{\Theta})\hat{a}(t) \label{eq:f_d_estimate}
\end{equation}

\noindent \newchanges{where $\hat{a}(t)$ is the estimate of $a(w)$.} Define the loss function
\begin{equation}\label{eq:loss}
    \mathcal{L}(\Theta,\{a_i\}_1^M)=\sum_{i=1}^{M}\sum_{k=1}^{N_k} \left \| \epsilon_k^{(i)} - \phi(x_k^{(i)}, x_{r,k}^{(i)}, \Theta)a_i \right \|^2_2\changes{.}
\end{equation}
\changes{Learning is performed using stochastic gradient descent (SGD) and spectral normalization. A parameter $a_i$ is introduced for each dataset to capture the specific environmental conditions. Domain-invariant learning \cite{oconnel2022neural} ensures the neural network basis remains invariant to these conditions. This approach enables us to employ adaptive control to learn and adapt in real-time to the current conditions encoded with the parameter $\hat{a}$.
}
It also enforces  $\left \| \varphi(x,x_d,x_r) \right \| \leq \gamma \left \| x-x_d\right \|$
where $\varphi(x,x_d,x_r) = \phi(x,x_r) - \phi(x_d,x_r)$, and $\gamma$ is specified Lipschitz constant for the spectrally normalized DNN \cite{shi2019neural}.

\subsection{Contraction-Based Adaptive Controller (CBAC)}
\changes{In this section, we present a controller that bounds system behavior to a target trajectory using contraction theory. We first define the error dynamics, then solve a convex optimization problem for the contraction metric, and conclude with a stability theorem and proof.
}
Let us consider the system in \eqref{eq:system_input_affine} and a given target trajectory $(x_d,u_d)$ 
\begin{align}
    \dot{x} &= f(x) + B(x)u + \phi(x,x_r)a + d(x) \label{eq:8} \\
    \dot{x}_d &= f(x_d) + B(x_d)u_d(x_d) + \phi(x_d,x_r)\hat{a}
\end{align}
where $x,x_d:\mathbb{R}_{\geq 0} \rightarrow \mathbb{R}^n$, $u_d:\mathbb{R}^n \rightarrow \mathbb{R}^m$, $f_d(x,x_r)= \phi(x,x_r)a$ captures the interaction-produced and wind disturbances, $f_d(x_d,x_r)=\phi(x_d,x_r)\hat{a}$ are the disturbance estimate used to determine the target trajectory, $d(x)$ are the unmodelled remainder of the bounded disturbances with $\bar{d}=\sup_{x}\left \| d(x)\right \|$ and $f:\mathbb{R}^n \rightarrow \mathbb{R}^n$, $B:\mathbb{R}^n \rightarrow \mathbb{R}^{n\times m}$ are known smooth functions.

Let us define an approximated state-dependent coefficient (SDC) parametrization as $A(x,x_d)$, such that
\begin{equation}
\begin{aligned}
 \label{eq:sdc_inexact}
    f(x)+&B(x)u_d - f(x_d)-B(x_d)u_d = \\
    &A(x,x_d)(x-x_d) + \varepsilon_A(x,x_d)   
\end{aligned}
\end{equation}
where $\varepsilon_A(x,x_d)$ is a parametrization error that can be considered as disturbances $d'(x,x_d) = d(x)+\varepsilon_A(x,x_d)$.
Then, by choosing the control law
\begin{equation}
    u = u_d - K(x,x_d)(x-x_d) - B^{\dagger}(x)\varphi(x,x_d,x_r)\hat{a}
\end{equation}
the dynamics \newchanges{in \eqref{eq:8}} can be equivalently written as
\begin{equation}
\begin{aligned}
    \dot{x} = &\dot{x}_d + (A(x,x_d) - B(x)K(x,x_d))(x-x_d)   \\
    &- B(x)B^{\dagger}(x)\varphi(x,x_d,x_r)\hat{a} + \varphi(x,x_d,x_r)a \label{eq:error_dyn_diff}\\
    &- \phi(x_d,x_r)\tilde{a}+ d'(x)
\end{aligned}
\end{equation}
where $\tilde{a}=\hat{a}-a$ is the error between the estimate $\hat{a}$ and actual parameter $a$, $K(x,x_d)=R^{-1}(x,x_d)B^T(x)M(x,x_d)$ is a state-feedback control gain based on the \changes{contraction} metric $M(x,x_d)$ \changes{\cite{lohmiller1998contraction,tsukamoto2020robust,manchester2017control}} that will be explained later and a weight matrix $R(x,x_d)\succ 0$, where $R(x,x_d)\succ0$ denotes that $R(x,x_d)$ is positive definite, $B^{\dagger}(x) = (B^T(x)B(x))^{-1}B^T(x)$ is the Moore-Penrose inverse of the matrix $B(x)$ which has linearly independent columns and $\varphi(x,x_d,x_r) = \phi(x,x_r) - \phi(x_d,x_r)$.
Note that in such a formulation, the disturbances $f_d(x) = \phi(x,x_r)a$ are matched through two parts. First, the function $\varphi(x,x_d,x_r)$ corresponds to the cancellation of the disturbances based on the discrepancy from the target trajectory $x_d$, and the term $-B(x)B^{\dagger}(x)\varphi(x,x_d,x_r)\hat{a}$ which corresponds to the online adaptation part that acts through the control input and is fundamentally limited via matrix $B(x)$.

\begin{problem}\label{prob:W_convex_optimization}
Let $\underline{\omega}, \bar{\omega} \in (0,\infty)$, $\omega_{\chi}=\bar{\omega}/\underline{\omega}$. Determine \changes{the contraction metric} $M(x,x_d)=W^{-1}(x,x_d)\succ0$, \changes{with $\lambda_{\textup{min}}(W)=\underline{\omega}$, $\lambda_{\textup{max}}(W)=\bar{\omega}$}, by solving the convex optimization problem for a given value of $\alpha\in(0,\infty)$:
\begin{equation}
    \min_{\nu>0,\omega_\chi\in\mathbb{R},\bar{W}\succ 0} \omega_\chi
\end{equation}
subject to the convex constraints 
\begin{gather}
    -\dot{\bar{W}} + 2\textup{sym}(A\bar{W})-2\nu BR^{-1}B^T \preceq -2\alpha \bar{W}, \label{eq:W_C1} \\
    \partial_{b_j(x)} \bar{W} +  \partial_{b_j(x_d)} \bar{W} = 0, \quad j=1,...,m \label{eq:W_C2} \\ %
    I\preceq \bar{W}\preceq \omega_\chi I, %
\end{gather}
where $A=A(x,x_d)$ and $B=B(x)=\left [b_1(x),...,b_m(x)\right ]$ are the state-dependent coefficients defined in \eqref{eq:sdc_inexact}, $\bar{W}=\bar{W}(x)=\nu W(x)$, $\nu=1/\underline{\omega}$, and $\partial_{b_j(x)} \bar{W} = \sum_{i=1}^{n}\frac{\partial W}{\partial x_i}b_{ij}(x)$ is the notation for directional derivative where $b_{ji}(x)$ is the $i$th element of the column vector $b_j$.
\end{problem}
\begin{rem} \label{rem:matrix_m} \changes{We minimize the eigenvalue ratio of $W$ to optimize the transient response and error bounds, while ensuring stability with constraint \eqref{eq:W_C1} and relaxing the need for \textit{a priori} knowledge of the closed-loop controller $K$ \eqref{eq:W_C2}.}
The structure of matrix $M$ is a non-trivial problem that depends on the considered dynamical system. 
It can be computed using SOS programming \cite{manchester2017control} \changes{or, alternatively, pointwise for the state space of interest \cite{tsukamoto2020robust}, with the results then} fitted to a neural network for a representation valid across the entire state space \cite{tsukamoto2020neural}.
In Problem~\ref{prob:W_convex_optimization}, $\alpha$ is \changes{treated} as given, simplifying the optimization. 
For a fixed convergence rate, this approach obtains an appropriate contraction metric and robust set estimates, as shown in Theorem~\ref{thm:main_thm}. However, \changes{obtaining} the optimal contraction rate $\alpha$ and metric $M$ can be achieved \changes{through} a line search on $\alpha$ \cite{singh2023robust}. 
\end{rem}
\begin{thm}\label{thm:main_thm}
Suppose there exists the contraction metric $M(x,x_d)\succ0$ and $M(x,x_d)=W^{-1}(x,x_d)$ obtained by solving Problem~\ref{prob:W_convex_optimization} for a given value of $\alpha\in(0,\infty)$ and that $\sup \| d'(x,x_d,u_d) \| \leq \bar{d}$.
Suppose further that the system is controlled by the following adaptive control law:
    \begin{align}
        u &= u_d - Ke - B^{\dagger}\varphi\hat{a} \\
        \dot{\hat{a}} &= -\sigma \hat{a} + P\phi^TR^{-1}(y-\phi\hat{a}) +P(BB^{\dagger}\varphi)^TMe \label{eq:adapt_law}\\
        \dot{P} &= -2\sigma P + Q - P\phi^TR^{-1}\phi P \label{eq:dotP}
    \end{align}
where $e=x-x_d$, $K=R(x,x_d)^{-1}B(x)^TM(x,x_d)$, $\varphi=\varphi(x,x_d,x_r)$, $\phi = \phi(x,x_r)$, $y$ is the measured discrepancy between the observed error dynamics and the known dynamics, $P$ is the covariance matrix, $Q\succ0$ is a weight matrix and $\sigma \in \mathbb{R}_{\geq 0}$. 
\changes{If there exists $\bar{\alpha}>0$ such that 
\small
\begin{equation}
    -\begin{bmatrix}
2\alpha M  & 2M\phi \\
0 & P^{-1}QP^{-1}+\phi^TR^{-1}\phi \\
\end{bmatrix} \preceq -2\bar{\alpha}\begin{bmatrix}
 M  & 0 \\
0 & P^{-1} \\
\end{bmatrix} \label{eq:LMI}
\end{equation} \normalsize
holds, then} the error dynamics $e$ are bounded with 
\begin{equation*}
    \| e \| \leq \lambda_\mathcal{M}  e^{-\bar{\alpha}t} (\left \| e(0) \right \| + \| \tilde{a}(0)\|) + \left (1- e^{-\bar{\alpha}t} \right ) \frac{D}{\bar{\alpha}\lambda_{\textup{min}}(\mathcal{M})},
\end{equation*}
\newchanges{where}
\begin{equation}\label{eq:D}
    D = \sup_t \left \| \begin{bmatrix}
M(I-BB^{\dagger})\varphi a +Md \\ \phi^TR^{-1}\varepsilon-\sigma P^{-1}{a}-P^{-1}\dot{a}
\end{bmatrix} \right \|
\end{equation} 
\newchanges{and} $\varepsilon = y-\phi a$, \small$\mathcal{M} =  \begin{bmatrix}
 M  & 0 \\
0 & P^{-1} \\
\end{bmatrix}$, \normalsize  $\lambda_\mathcal{M} = \sqrt{\frac{\lambda_{\textup{max}}(\mathcal{M})}{\lambda_{\textup{min}}(\mathcal{M})}}$.

\end{thm}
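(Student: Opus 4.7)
The plan is to establish the exponential bound via a composite Lyapunov function
$V(e,\tilde a) = e^T M(x,x_d) e + \tilde a^T P^{-1} \tilde a = z^T \mathcal{M} z$
on the joint state $z=[e^T,\tilde a^T]^T$, combining the contraction certificate $M$ with the adaptive-estimation certificate $P^{-1}$. The proof should reduce to showing that $\dot V$ along the closed-loop trajectory is dominated by the (symmetric part of the) quadratic form generated by the left-hand side matrix in \eqref{eq:LMI}, up to a residual linear-in-$z$ term that is exactly matched to $D$ in \eqref{eq:D}.

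First I would substitute the adaptive control law into \eqref{eq:error_dyn_diff}, use $a=\hat a-\tilde a$ and $\varphi=\phi(x,x_r)-\phi(x_d,x_r)$ to telescope the parameter-dependent terms, and rewrite the error dynamics as $\dot e=(A-BK)e-\phi\tilde a+(I-BB^{\dagger})\varphi\,a+d'$. Pre- and post-multiplying \eqref{eq:W_C1} by $M$ together with $K=R^{-1}B^T M$ then bounds $e^T(\dot M+2\textup{sym}(M(A-BK)))e\le -2\alpha\, e^T M e$, which produces the $(1,1)$ block of \eqref{eq:LMI}; note that \eqref{eq:W_C2} is needed to guarantee that $\dot M$ along the closed loop coincides with its open-loop counterpart, since the control-direction derivatives of $\bar W$ then vanish. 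The cross term $-2 e^T M\phi\tilde a$ produces the off-diagonal $-2M\phi$ block, and $2 e^T M[(I-BB^{\dagger})\varphi\,a+d]$ contributes the top half of $D$. For the parameter block, $\tfrac{d}{dt}P^{-1}=-P^{-1}\dot P P^{-1}$ together with \eqref{eq:dotP} gives $2\sigma P^{-1}-P^{-1}QP^{-1}+\phi^T R^{-1}\phi$; combining this with $\dot{\tilde a}=\dot{\hat a}-\dot a$ from \eqref{eq:adapt_law} and expanding $y-\phi\hat a=-\phi\tilde a+\varepsilon$, the $-\sigma\hat a$ and $-P\phi^T R^{-1}\phi\tilde a$ pieces merge with the Riccati contribution to yield the $(2,2)$ block $-(P^{-1}QP^{-1}+\phi^T R^{-1}\phi)$. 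The feedback injection $P(BB^{\dagger}\varphi)^T Me$ in \eqref{eq:adapt_law} is precisely designed to cancel what would otherwise be a $(2,1)$ cross term in $\dot V$, producing the zero lower-left block of \eqref{eq:LMI}, and leaving $\phi^T R^{-1}\varepsilon-\sigma P^{-1}a-P^{-1}\dot a$ as the bottom half of $D$.

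Assembling these pieces yields $\dot V\le -2\bar\alpha V+2\sqrt V\,D/\sqrt{\lambda_{\textup{min}}(\mathcal{M})}$ by Cauchy--Schwarz, hence $\tfrac{d}{dt}\sqrt V\le -\bar\alpha\sqrt V+D/\sqrt{\lambda_{\textup{min}}(\mathcal{M})}$; the comparison lemma and the sandwich $\lambda_{\textup{min}}(\mathcal{M})\|z\|^2\le V\le \lambda_{\textup{max}}(\mathcal{M})\|z\|^2$ deliver the stated bound, with $\lambda_{\mathcal{M}}=\sqrt{\lambda_{\textup{max}}(\mathcal{M})/\lambda_{\textup{min}}(\mathcal{M})}$ arising naturally. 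The main obstacle I expect is the bookkeeping in the second step: one has to verify that the adaptive law's feedback term \emph{exactly} cancels the $(2,1)$ coupling while the contraction design \emph{exactly} preserves the $(1,2)$ coupling $-2M\phi$, and that $(I-BB^{\dagger})\varphi$ splits cleanly into a $\tilde a$-dependent part absorbed by the adaptation and an $a$-dependent persistent-disturbance part inside $D$; beyond that, the argument is a standard robust-contraction plus ISS-style comparison argument in the spirit of \cite{tsukamoto2020robust,singh2023robust}.
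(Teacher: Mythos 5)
Your proposal is correct and takes essentially the same route as the paper: the composite Lyapunov function $V=e^TMe+\tilde a^TP^{-1}\tilde a$, the contraction bound $\dot M+2\,\textup{sym}(M(A-BK))\preceq-2\alpha M$ inherited from \eqref{eq:W_C1}, the Riccati identity for $\tfrac{\mathrm{d}}{\mathrm{d}t}P^{-1}$ combined with \eqref{eq:dotP}, cancellation of the $BB^{\dagger}\varphi$ coupling by the $P(BB^{\dagger}\varphi)^TMe$ injection, the residual collected into $D$, and the comparison lemma. The only slip is in the bookkeeping you yourself flag: the error dynamics should read $\dot e=(A-BK)e-BB^{\dagger}\varphi\tilde a+(I-BB^{\dagger})\varphi a-\phi\tilde a+d'$ (cf.\ \eqref{eq:error_dyn_diff}), and the adaptation law's injection cancels the cross-coupling generated by the $-BB^{\dagger}\varphi\tilde a$ term you omitted, rather than an independent $(2,1)$ term --- with that correction your derivation coincides with the paper's.
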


\begin{proof}
By satisfying conditions \eqref{eq:W_C1} for the matrix $\bar{W}(x,x_d)$, its scaled inverse $M(x,x_d)$ satisfies $\dot{M} + 2\textup{sym}(A_K M) \leq -2\alpha M$ \newchanges{(P1)} \changes{\cite{manchester2017control}}, where $A_K = A-BK$. %
Rewrite the error dynamics $e=x-x_d$ using \eqref{eq:error_dyn_diff} as 
\begin{align*}
\dot{e} &=(A - BK)e - BB^{\dagger}\varphi\hat{a} + \varphi a - \phi\hat{a} + d' \\
&= (A-BK)e -BB^{\dagger}\varphi\tilde{a}+ (I-BB^{\dagger})\varphi a- \phi\hat{a} + d'
\end{align*}
where $\tilde{a}=\hat{a}-a$. Furthermore, $\dot{\tilde{a}}=\dot{\hat{a}}-\dot{a}$ and using \eqref{eq:adapt_law}  
\begin{equation*}
    \dot{\tilde{a}}=-\sigma\tilde{a}-\sigma a + P\phi^TR^{-1}(\varepsilon-\phi\tilde{a}) {-P(BB^{\dagger}\varphi)^TMe}-\dot{a}
\end{equation*}
where $\varepsilon = y-\phi a$.
Similar to \cite[Theorem 2]{tsukamoto2021learning}, we prove the formal stability and robustness guarantees using the contraction analysis.
For a Lyapunov function $V=e^T Me + \tilde{a}^T P^{-1}\tilde{a}$, and using $\frac{\mathrm{d}}{\mathrm{d}t}P^{-1}=-P^{-1}\dot{P}P^{-1}\newchanges{\overset{\eqref{eq:dotP}}{=}}2\sigma P^{-1} - P^{-1}QP^{-1} + \phi^TR^{-1}\phi$ we obtain
\begin{align*}
    \dot{V} &= \begin{bmatrix}
e \\ \tilde{a}
\end{bmatrix}^T\begingroup \setlength\arraycolsep{0pt}
\begin{bmatrix}
\dot{M} + 2\textup{sym}(MA_K)  &  2MBB^{\dagger}\varphi - 2M\phi \\
{-2(BB^{\dagger}\varphi)^TM} & -P^{-1}QP^{-1}-\phi^TR^{-1}\phi \\
\end{bmatrix}\begin{bmatrix}
e \\ \tilde{a}
\end{bmatrix} \endgroup\\
&\phantom{=} + 2\begin{bmatrix}
e \\ \tilde{a}
\end{bmatrix}^T \begin{bmatrix}
M(I-BB^{\dagger})\varphi a +Md' \\ \phi^TR^{-1}\varepsilon-\sigma P^{-1}{a}-P^{-1}\dot{a}
\end{bmatrix} \\
&\overset{\newchanges{(P1)}}{\leq}-\begin{bmatrix} e \\ \tilde{a} \end{bmatrix}^T\begin{bmatrix}
2\alpha M  & 2M\phi \\
0 & P^{-1}QP^{-1}+\phi^TR^{-1}\phi \\
\end{bmatrix}\begin{bmatrix}
e \\ \tilde{a}
\end{bmatrix} \\
&\phantom{=} + 2\begin{bmatrix}
e \\ \tilde{a}
\end{bmatrix}^T \begin{bmatrix}
M(I-BB^{\dagger})\varphi a +Md' \\ \phi^TR^{-1}\varepsilon-\sigma P^{-1}{a}-P^{-1}\dot{a}
\end{bmatrix}
\end{align*}\normalsize
As $P^{-1}QP^{-1}$, $M$ and $P^{-1}$ are all positive definite and uniformly bounded and $\phi^TR^{-1}\phi$ is positive semidefinite, there exists some $\bar{\alpha}>0$ such that \changes{\eqref{eq:LMI} holds} for all $t$ \cite{oconnel2022neural}. 
Then, $
    \dot{V}\leq-2\bar{\alpha}V+2\sqrt{\frac{V}{\lambda_{\textup{min}}(\mathcal{M})}}D,$
where $D$ as in \eqref{eq:D}.
Using the Comparison lemma \cite{Khalil_nonlinear}, and $\left \| \begin{bmatrix}
e \\ \tilde{a}
\end{bmatrix} \right \| \leq \sqrt{\frac{V}{\lambda_{\textup{min}}(\mathcal{M})}}$, we \newchanges{ obtain }
$
    \left \| \begin{bmatrix}
e \\ \tilde{a}
\end{bmatrix} \right \| \leq \lambda_\mathcal{M}  e^{-\bar{\alpha}t} \left \| \begin{bmatrix}
e(0) \\ \tilde{a}(0)
\end{bmatrix} \right \| + \left (1- e^{-\bar{\alpha}t} \right ) \frac{D}{\bar{\alpha}\lambda_{\textup{min}}(\mathcal{M})}.
$ 
The final result follows by $\| e \| \leq \left \| \begin{bmatrix}
e \\ \tilde{a}
\end{bmatrix} \right \| \leq \| e \| + \| \tilde{a}\|$.

\end{proof}

\begin{rem}
A similar problem formulation to Theorem 1 can be found in \cite[Theorem 2]{tsukamoto2021learning}. 
\changes{Our work differs by not assuming} the matched uncertainty condition $\varphi(x,x_d,x_r)a \in \text{span}(B(x))$, \changes{leading to} a less conservative stability theorem valid for a broader class of systems. 
Unmatched disturbances are \changes{addressed} by generating the target trajectory $x_d$, \changes{accounting for} learned disturbances in the optimization problem (Problem~\ref{prob:ocp_standard} \newchanges{below}).
The covariance matrix $P$ for the adaptation variable $\hat{a}$ is updated analogously as in Kalman-Bucy filter \cite{kalman1961bucy}.
\changes{In practice, we implemented the discretized version of CBAC at a frequency of 100 Hz. At this rate, the discretization error is minimal and should not significantly affect performance or stability guarantees. The control law remains unchanged, while the adaptation law follows the two-step Kalman filter approach \cite[Section S4]{oconnel2022neural}}. 
This \changes{result} enables us to further quantify the upper bound \eqref{eq:D} as \changes{done} in Corollary \ref{cor:upper_bound}.
\end{rem}

\subsection{Chance-constrained Upper Bound}
Based on the covariance matrix $P$, and a user-specified small probability of failure $\delta>0$, $\delta\in(0,1)$, we can determine the uncertainty sets as
\begin{equation}
    \mathcal{S}_P(\hat{a},P,\delta):=\{ a :  \| \hat{a} - a \|_{P}^2 \leq \chi_k^2(1-\delta)\}
\end{equation}
where $\chi_k^2(p)$ is the Inverse Cumulative Distribution Function (ICDF) of the chi-square distribution with $k$ degrees of freedom, evaluated at the probability values in $p$.

\begin{cor}\label{cor:upper_bound}
    Assume that the unknown parameter $a$, estimated through the adaptation law \eqref{eq:adapt_law}, varies slowly such that $\dot{a} \approx 0$, and that the estimation $\hat{a}$ has reached a steady state. Then $D$ in \eqref{eq:D} can be upper bounded on a set $x\in \mathcal{X}$ with
    \begin{equation*}
        D\leq \bar{D}:= \frac{\bar{d}}{\omega_\chi} + \bar{\phi}\bar{\varepsilon}\lambda_{\textup{max}}(R) + \left ( \frac{\bar{b}\bar{\phi}}{\omega_\chi}+\lambda_{\textup{min}}(P)\sigma \right ) \sup_t \| a \|,  
    \end{equation*}
    where $\bar{b}=\lambda_{\textup{max}}(I-BB^{\dagger})$, $\bar{\varphi} = \sup_{x\in\mathcal{X}} \| \varphi \| $, $\bar{\phi} = \sup_{x\in\mathcal{X}} \| \phi \| $ and $\bar{\varepsilon} = \sup \|y-f_d\|$ is the upper bound on the measurement noise. Furthermore, a chance-constrained bound can be derived by using
   $\sup_t \| a \| \leq \|\hat{a}\| + \sqrt{\frac{\chi_k^2(1-\delta)}{\lambda_{\textup{min}}(P)}}$.
   
\end{cor}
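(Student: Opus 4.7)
The plan is to bound $D$ defined in \eqref{eq:D} by a sequence of elementary inequalities, using the steady-state and slow-variation assumptions to eliminate the $P^{-1}\dot{a}$ contribution and then applying the triangle inequality together with submultiplicativity of the spectral norm row by row. Observing that for any block vector with rows $u_1, u_2$ one has $\|[u_1^T,u_2^T]^T\| \leq \|u_1\| + \|u_2\|$, the task reduces to bounding the two rows of the vector inside $D$ separately using the uniform sup-norms $\bar{d}, \bar{\varepsilon}, \bar{\phi}, \bar{\varphi}$, the structural bound $\|I-BB^{\dagger}\| \leq \bar{b}$, and the spectral bounds on $M$, $P^{-1}$ and $R^{-1}$ that follow from Problem~\ref{prob:W_convex_optimization} and from the positive-definiteness of $P$, $Q$ and $R$.

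Concretely, for the first row I would write
\[
\|M(I-BB^{\dagger})\varphi a + Md\| \leq \|M\|\,\bar{b}\,\bar{\phi}\,\|a\| + \|M\|\,\bar{d},
\]
and substitute the spectral bound on $M$ that follows from $I \preceq \bar{W} \preceq \omega_{\chi} I$ combined with $\bar{W} = \nu W$ and $\nu = 1/\underline{\omega}$, yielding the $1/\omega_{\chi}$ factors in the first and third terms of $\bar{D}$. For the second row, the steady-state and slow-variation assumptions kill the $P^{-1}\dot{a}$ contribution, and the remaining terms are bounded as
\[
\|\phi^T R^{-1}\varepsilon - \sigma P^{-1} a\| \leq \bar{\phi}\,\|R^{-1}\|\,\bar{\varepsilon} + \sigma\,\|P^{-1}\|\,\|a\|,
\]
after which the corresponding eigenvalue bounds on $R$ and $P$ produce the remaining terms. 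Taking $\sup_t$ and collecting the $\|a\|$-coefficients gives the stated form of $\bar{D}$.

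For the chance-constrained refinement, I would invoke the definition of the uncertainty set $\mathcal{S}_P(\hat{a},P,\delta)$: with probability at least $1-\delta$ the true parameter $a$ lies in the ellipsoid $\{a : \|\hat{a}-a\|_P^2 \leq \chi_k^2(1-\delta)\}$. Lower-bounding the weighted norm by the smallest eigenvalue of the weighting matrix then yields $\|\hat{a}-a\|^2 \leq \chi_k^2(1-\delta)/\lambda_{\textup{min}}(P)$, and the triangle inequality $\|a\| \leq \|\hat{a}\| + \|\hat{a}-a\|$ gives the claimed high-probability bound on $\sup_t \|a\|$, which plugs directly into $\bar{D}$.

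The main obstacle I expect is bookkeeping rather than any deep argument: care is required to ensure that the spectral bound on $M = W^{-1}$ is consistent with the $\nu W$ scaling used in Problem~\ref{prob:W_convex_optimization} so that the $1/\omega_{\chi}$ factor emerges cleanly, and that the eigenvalue conventions for $R$ and $P$ in the statement match those produced by the elementary norm inequalities. The steady-state assumption on $\hat{a}$ is also used implicitly to justify applying the confidence ellipsoid derived from $P$ at the instant the bound is evaluated, since an estimator transient would inflate $\|\hat{a}-a\|$ beyond the ellipsoidal prediction.
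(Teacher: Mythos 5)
The paper gives no proof of Corollary~\ref{cor:upper_bound}, so there is nothing to diverge from: your route (split the stacked vector in \eqref{eq:D} into its two rows via $\|[u_1^T,u_2^T]^T\|\le\|u_1\|+\|u_2\|$, apply the triangle inequality and submultiplicativity with the sup-norms $\bar d,\bar\varepsilon,\bar\phi,\bar b$, drop $P^{-1}\dot a$ by the slow-variation assumption, and get the chance-constrained refinement from $\|a\|\le\|\hat a\|+\|\hat a-a\|$ together with $\|\hat a-a\|_P^2\ge\lambda_{\textup{min}}(P)\|\hat a-a\|^2$ on the confidence ellipsoid) is the natural and essentially only argument, and its skeleton is sound.

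The one substantive issue is the point you defer as ``bookkeeping'': the constants do not, in fact, emerge cleanly, and you should not expect them to. From $I\preceq\bar W\preceq\omega_\chi I$ with $\bar W=\nu W$, $\nu=1/\underline{\omega}$, one gets $\underline{\omega}I\preceq W\preceq\bar\omega I$ and hence $\|M\|=\lambda_{\textup{max}}(W^{-1})\le 1/\underline{\omega}$, which equals $1/\omega_\chi=\underline{\omega}/\bar\omega$ only in degenerate cases; likewise the elementary bounds give $\|R^{-1}\|=1/\lambda_{\textup{min}}(R)$ rather than $\lambda_{\textup{max}}(R)$, and $\sigma\|P^{-1}\|=\sigma/\lambda_{\textup{min}}(P)$ rather than $\sigma\lambda_{\textup{min}}(P)$. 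So your derivation proves a bound of the stated \emph{form} but with the reciprocal/extremal eigenvalues in three places; matching the corollary verbatim requires either additional normalization assumptions or accepting that the stated constants are typos (the statement also defines $\bar\varphi$ but then writes $\bar\phi$ in the coefficient of the $\varphi a$ term, which points the same way). State the constants you actually obtain and note the discrepancy explicitly rather than asserting the printed factors follow.
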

\begin{rem} 
\changes{The assumption that $\dot{a}$ changes slowly (practically zero) is needed to establish an upper bound that can be calculated, as estimating this parameter beforehand is difficult. 
This assumption holds in practice since wind conditions remain relatively constant during algorithm execution.}
Due to the non-constant terms in the upper bound that depend on the time-varying matrix $P$, the corollary is valid only when the estimation has converged.
\changes{Numerically,} $\bar{D}$ can be computed at each MPC (Section~\ref{ssec:ocp_as_mpc}) time step. 
Finally, the initial adaptation error $\| \tilde{a}(0) \|$ in Theorem~\ref{thm:main_thm}, can be written as $\| \tilde{a}(0) \| \leq \sqrt{\frac{\chi_k^2(1-\delta)}{\lambda_{\textup{min}}(P_{\mathcal{D}_{\textup{meta}}})}}$ \changes{assuming $\hat{a}(0)$ comes from the neural network training step (Section~\ref{ssec:nn_model_of_disturbances}).}
\end{rem}
\begin{cor}\label{cor:error_bound}
    Let $\bar{D}$ be defined as in Corollary~\ref{cor:upper_bound}. Assume that the current wind conditions $a$ have been previously recorded in the dataset $\mathcal{D}_{\textup{meta}}$, and that the initial value for the estimated parameter $\hat{a}(0)$ is determined as described in Section~\ref{ssec:nn_model_of_disturbances}. Then, $\| \tilde{a}(0)\|\leq \sqrt{\frac{\chi_k^2(1-\delta)}{\lambda_{\textup{min}}(P_{\mathcal{D}_{\textup{meta}}})}}$ and the error dynamics can be upper bounded with \small
\begin{equation*}
\begin{aligned}
           \bar{e}(t,\hat{a},P,\delta)=& e^{-\bar{\alpha}t} \left ( \lambda_\mathcal{M} \left \| e(0) \right \| + \lambda_\mathcal{M}\sqrt{\frac{\chi_k^2(1-\delta)}{\lambda_{\textup{min}}(P_{\mathcal{D}_{\textup{meta}}})}} \right )\\
        &+ \left (1- e^{-\bar{\alpha}t} \right ) \frac{\bar{D}}{\bar{\alpha}\lambda_{\textup{min}}(\mathcal{M})},  
\end{aligned}
\end{equation*}\normalsize
\end{cor}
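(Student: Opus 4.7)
The plan is to assemble the bound by combining three ingredients that are already available: the error/adaptation bound from Theorem~\ref{thm:main_thm}, the computable upper bound $\bar{D}$ established in Corollary~\ref{cor:upper_bound}, and a chance-constrained replacement of the initial adaptation error $\|\tilde{a}(0)\|$ using the covariance matrix $P_{\mathcal{D}_{\textup{meta}}}$ produced by meta-training.

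First I would invoke Theorem~\ref{thm:main_thm} directly, giving
\begin{equation*}
\|e(t)\| \leq \lambda_\mathcal{M}\, e^{-\bar{\alpha}t}\bigl(\|e(0)\|+\|\tilde{a}(0)\|\bigr) + \bigl(1-e^{-\bar{\alpha}t}\bigr)\frac{D}{\bar{\alpha}\lambda_{\textup{min}}(\mathcal{M})}.
\end{equation*}
Since the hypotheses of Corollary~\ref{cor:upper_bound} (slowly varying $a$, steady-state $\hat{a}$) are inherited from its statement, the disturbance magnitude $D$ defined in \eqref{eq:D} can be majorized by $\bar{D}$. Substituting $D\leq\bar{D}$ in the second term is then immediate; only the transient term retains the data-dependent quantity $\|\tilde{a}(0)\|$.

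Next, I would upgrade the assumption $a\in\mathcal{D}_{\textup{meta}}$ into a probabilistic containment statement. Because $\hat{a}(0)$ is produced by the offline meta-training procedure of Section~\ref{ssec:nn_model_of_disturbances} with associated posterior covariance $P_{\mathcal{D}_{\textup{meta}}}$, the Mahalanobis ball $\mathcal{S}_P(\hat{a}(0),P_{\mathcal{D}_{\textup{meta}}},\delta)$ contains the true parameter $a$ with probability at least $1-\delta$, so $\|\tilde{a}(0)\|_{P_{\mathcal{D}_{\textup{meta}}}}^{2}\leq\chi_{k}^{2}(1-\delta)$. Combining this with the elementary inequality $\lambda_{\textup{min}}(P_{\mathcal{D}_{\textup{meta}}})\|\tilde{a}(0)\|^{2}\leq \|\tilde{a}(0)\|_{P_{\mathcal{D}_{\textup{meta}}}}^{2}$ yields the stated bound $\|\tilde{a}(0)\|\leq\sqrt{\chi_{k}^{2}(1-\delta)/\lambda_{\textup{min}}(P_{\mathcal{D}_{\textup{meta}}})}$. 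Plugging both replacements into the Theorem~\ref{thm:main_thm} expression produces exactly $\bar{e}(t,\hat{a},P,\delta)$ as claimed.

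The main obstacle is the probabilistic step: asserting that $a$ lies in the Mahalanobis ball of radius $\sqrt{\chi_{k}^{2}(1-\delta)}$ around $\hat{a}(0)$ requires interpreting $P_{\mathcal{D}_{\textup{meta}}}$ as a genuine Bayesian (or at least frequentist-calibrated) uncertainty quantification of the final linear layer of the DNN. Under a Gaussian posterior on the ridge-regularized least-squares fit of the loss \eqref{eq:loss} restricted to the subset of $\mathcal{D}_{\textup{meta}}$ matching the operating condition, this containment is standard; in the general case it calls for further assumptions on the residual noise $\epsilon_{k}^{(i)}$ in \eqref{eq:D_i_dataset}. Once this calibration step is accepted, the remainder of the argument is purely algebraic substitution.
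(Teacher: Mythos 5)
Your proposal is correct and matches the paper's (implicit) argument: the corollary is obtained exactly by substituting $D\leq\bar{D}$ from Corollary~\ref{cor:upper_bound} and the chance-constrained bound $\|\tilde{a}(0)\|\leq\sqrt{\chi_k^2(1-\delta)/\lambda_{\textup{min}}(P_{\mathcal{D}_{\textup{meta}}})}$ (already stated in the remark following Corollary~\ref{cor:upper_bound}) into the bound of Theorem~\ref{thm:main_thm}. Your caveat about needing the Mahalanobis ball to be a calibrated uncertainty set for the probabilistic containment to hold is a fair observation that the paper leaves implicit.
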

\begin{rem}
\changes{Corrolary~\ref{cor:error_bound} provides a way to compute the error bound, valid for the current conditions, that can be used as an ingredient to obtain safe target trajectories through MPC. This bound certifies safe behaviour near obstacles.} 
 \end{rem}

\subsection{Optimal Control Problem as MPC}\label{ssec:ocp_as_mpc}
We formulate the optimal control problem with respect to the tracking objective $x_r$ to determine the target trajectory $(x_d,u_d)$ which will serve as an input to the CBAC.
\begin{problem}[ML-MPC]\label{prob:ocp_standard}
Let the desired states of the system at time $t$ be $x_d(t)$. Given the reference trajectory $x_r(\cdot|t)$, and the estimated error bound $\bar{e}(t,\hat{a},P,\delta)$ the meta-learning augmented MPC is
\begin{subequations}\label{eq:ml_ocp_problem}
\begin{equation} 
    \min_{{u}(\cdot|t)}
J(\hat{{x}}_d(\cdot|t),{u_d}(\cdot|t),x_r(\cdot|t))    \label{eq:ch6_8} 
\end{equation}
subject to %
\begin{align}
    &    \hat{{x}}_d(k+1|t)=f_k(\hat{{x}}_d(k|t),{u_d}(k|t)),x_r(k|t)),\label{eq:ch6_p_1} \\
    &\hat{{x}}_d(k|t)  \in \mathcal{X}, \\
    &{u}_d(k|t) \in \mathcal{U},\label{eq:ch6_p_3} \\
    &\hat{{x}}_d(k|t) \in \mathcal{X}_{\textup{safe}}(\bar{e}(t_k,\hat{a}(t),P(t),\delta)),\label{eq:ch6_p_2} \\
    &\hat{{x}}_d(N|t) \in \mathcal{X}_{\textup{goal}}(x_r(N|t),\bar{e}(t_N,\hat{a}(t),P(t),\delta)),\label{eq:ocp_goal}
\end{align}
\end{subequations}
for $k=0,1,...,N$, for all $i\in \mathcal{N}$, and $t_k = t+k\Delta t$, where $f_k(x_d,u_d,x_r)$ are discretized dynamics of $\dot{x}_d =f(x_d) + B(x_d)u_d + f_d(x_d,x_r)$.
\end{problem}

Set $\mathcal{X}$ denotes the set of system dynamics state constraints, $\mathcal{U}$ the input constraints, \changes{$\mathcal{X}_{\textup{safe}}(\bar{e}):=\left\{x\in \mathcal{X}: \newchanges{\left\{x \right\}}\cap \left (\mathcal{O} \oplus \mathcal{B}(\bar{e}) \right) = \emptyset \right\}$ 
is a safety set based on the obstacles occupying a closed set $\mathcal{O}$ and $\mathcal{B}(\bar{e})$ is a closed ball of radius $\bar{e}$ centered at the origin in $\mathbb{R}^3$};  $\mathcal{X}_{\textup{goal}}(x_r\newchanges{,\bar{e}}):=\left\{ x_d\in \mathcal{X}_{\textup{safe}}\newchanges{(\bar{e})}: \|x_d-x_r\|\leq \varepsilon_l \right \} $ is the terminal set \changes{around the final point of $x_r$ and $\varepsilon_l>0$ is its radius}. %
We define the cost function as 
\begin{align*}
    J(&\hat{{x}}_d(\cdot|t),{u_d}(\cdot|t),x_r(\cdot|t)) = \| \hat{x}_d(N|t) - x_r(N|t)\|_{Q_{m}}^2 \\
    &+ \sum_{k=0}^{N-1} \| \hat{x}_d(k|t) - x_r(k|t)\|_{Q_{m}}^2 + \| u_d(k|t)\|_{R_{m}}^2
\end{align*}
\changes{where the error between the target trajectory $x_d$ and the tracking objective $x_r$ as well as the control effort are penalized throughout the prediction horizon of length $N$ and matrices $Q_m,R_m\succ0$. 
The MPC formulation in Problem~\ref{prob:ocp_standard} yields a dynamically feasible and optimal target trajectory $(x_d,u_d)$ that accounts for the disturbance model. }

\section{Results}\label{sec:results}
To \changes{evaluate} the proposed algorithm, we use synthetic wind disturbance data and the ground-effect model presented in \cite{danjun2015autonomous}. 
Side disturbances are computed as forces acting on the propellers under constant wind.  
The network is a four-layer fully connected DNN with ReLU activations, which proved to be effective on similar tasks \cite{oconnel2022neural,shi2021neuralswarm2}.
Given the temporal separability between the position and attitude dynamics of the quadrotor model, we design the position controller as CBAC and use a geometric controller valid on complete $SO(3)$ \cite{lee2010geometric, bandyopadhyay2016nonlinear} for the attitude.
The CBAC for the position dynamics of the quadrotor, cast to the form of \eqref{eq:system_input_affine}, is derived by considering the reduced state $x=[p^T,v^T,\eta^T]^T\in\mathcal{X}\subseteq\mathbb{R}^6\times\mathbb{T}$, and input $u=[f_u,\eta_u^T]^T\in\mathcal{U}\subset\mathbb{R}\times\mathbb{T}$ and 
\begin{equation*}
    f(x)=\begin{bmatrix}
        v \\ -ge_3 \\ 0
    \end{bmatrix}  B(x) = \begin{bmatrix}
        0 && 0\\ \frac{1}{m}R(\eta)e_3 && 0 \\ 0 && I_3
    \end{bmatrix} f_d=\begin{bmatrix}
        0 \\ \phi a \\ 0
    \end{bmatrix}
\end{equation*}
where $e_3=[0,0,1]^T$ and $I_3$ is the $3\times3$ identity matrix. One parametrization of the SDC matrix $A(x,x_d,u_d)$ can be obtained by considering Taylor's expansion of $r_3(\eta)=R(\eta)e_3=r_3(\eta_d)+J(\eta_d) \tilde{\eta} + \frac{1}{2}\tilde{\eta}^T H(\eta_d) \tilde{\eta} +o(\| \Delta \eta \|^2)$ where $J(\eta_d) = \frac{\partial r_3}{\partial \eta}(\eta_d)$ is Jacobian matrix, and $H(\eta_d)=\frac{\partial J}{\partial \eta}(\eta_d)$ is Hessian tensor, $\tilde{\eta}=\eta-\eta_d$, and $o(\|\tilde{\eta}\|^2)$ is the little-o notation. 
Thus, for \eqref{eq:sdc_inexact} to hold, we choose
\begin{equation}
    A(x,x_d,u_d) = \begin{bmatrix}
        0 & I_3 & 0 \\
        0 & 0 & \frac{f_u}{m}(J(\eta_d) + \frac{1}{2}\tilde{\eta}^T H(\eta_d)) \\
        0 & 0 & 0
    \end{bmatrix}
\end{equation}
and $\varepsilon_A(x,x_d,u_d) = [0^T,        \frac{f_u}{m}o(\|\tilde{\eta}\|^2)^T,0^T]^T$.
The matrix $M(x,x_d,u_d)$ is obtained by solving the convex optimization problem in Problem~\ref{prob:W_convex_optimization} for a grid of points of the considered state space. We find optimal $\alpha$ using line search and approximate it with the neural network as described in Remark~\ref{rem:matrix_m}.

\begin{figure}[t]
\captionsetup[subfigure]{justification=centering}
    \centering
    %
    %
    %
    %
    %
    \begin{comment}
    \begin{minipage}{0.49\columnwidth}
        \centering
        \includegraphics[width=\columnwidth]{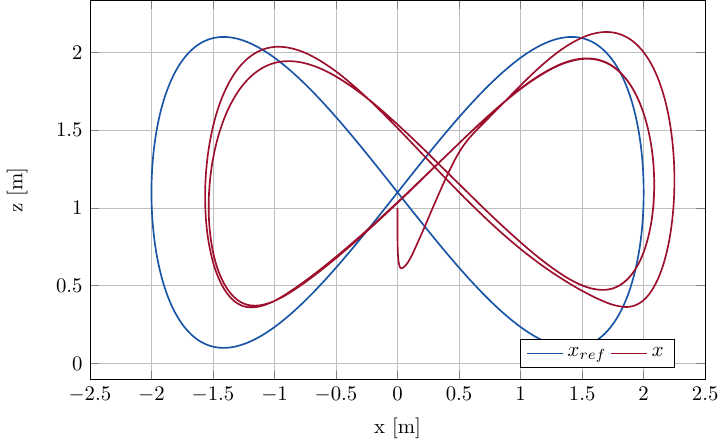}
        \subcaption{MPC without disturbance knowledge, RMSE: 0.29720}
    \end{minipage}\hfill
    \begin{minipage}{0.49\columnwidth}
        \centering
        \includegraphics[width=\columnwidth]{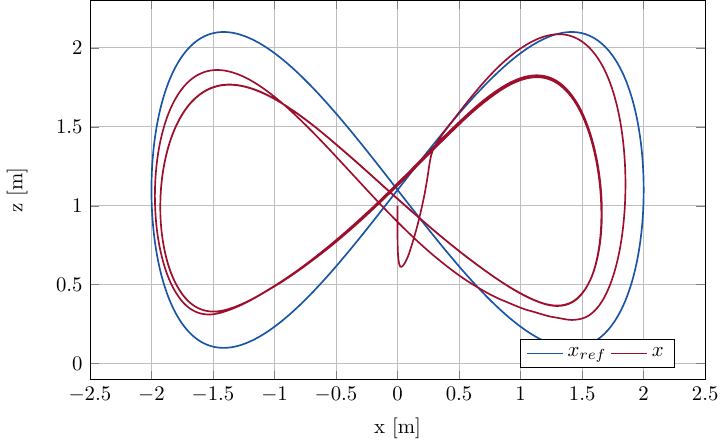}
        \subcaption{Proposed algorithm, \\ RMSE: 0.17714}
    \end{minipage}
    \end{comment}

    \begin{minipage}{0.33\columnwidth}
        \centering
        \includegraphics[width=\columnwidth]{images/fig8/lemniscate_mpc_dist.pdf}
        \subcaption{Nominal MPC, RMSE: 0.29720}
    \end{minipage}\hfill
    \begin{minipage}{0.33\columnwidth}
        \centering
        \includegraphics[width=\columnwidth]{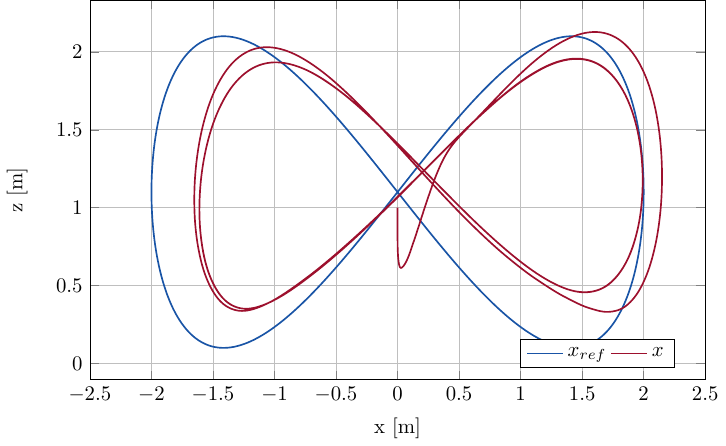}
        \subcaption{\changes{MPC+ML-CBAC, RMSE: 0.22470}}
    \end{minipage}\hfill
    \begin{minipage}{0.33\columnwidth}
        \centering
        \includegraphics[width=\columnwidth]{images/fig8/lemniscate_mpc_nf_ff.pdf}
        \subcaption{Our algorithm, RMSE: 0.17714}
    \end{minipage}

\caption{Performance comparison \changes{of (a) nominal MPC, (b) nominal MPC with a feedback controller consisting of a neural network and an adaptive controller similar to \cite{oconnel2022neural} and (c) our algorithm. For fairness, the NNs in cases (b) MPC without and (c) with disturbance knowledge are identical and trained on the same dataset.} Disturbances include the ground effect %
(ground at $z=0$) and constant crosswind of $12$ m/s. }\label{fig:lemniscate}\vspace{-0.6cm}
\end{figure}

\textit{1) Trajectory Tracking in a Figure-8 Pattern.}
\changes{The framework operates in real-time, achieving 100 Hz for CBAC and 5 Hz for MPC solved with IPOPT in CasADi, with the inference taking at most approximately 10 ms on a personal computer running Ubuntu 22.04 with an Intel i7 processor and an NVIDIA RTX A500 CUDA 12-enabled graphics card.}
Performance is evaluated with Root Mean Square Error (RMSE) relative to the reference trajectory, which is a rotated Figure 8 in the x-z plane. 
When the MPC is unaware of disturbances, discrepancies persist over multiple periods due to the inability of MPC to account for them.
Adding a feedback controller does not improve the performance as it is limited by how well it can track the desired trajectory $x_d$ generated by the MPC. Performance improves when the MPC incorporates knowledge of the disturbance model.
\textit{2) Autonomous Soft Landing.}
We focus on a specific problem of autonomous UAV landing \cite{persson2019MPC,lapandic2021aperiodic,persson2024ecc}. In this setup, we assess whether the algorithm achieves a smooth landing, which means approaching the ground $z=0$ as smoothly as possible. 
Figure~\ref{fig:landing} demonstrates the ability of the algorithm to compensate for the ground effect and land smoothly. It is worth noting that the algorithm in both examples uses the same NN trained on data from the first example.
\begin{figure}
    \centering
    \includegraphics[width=\linewidth]{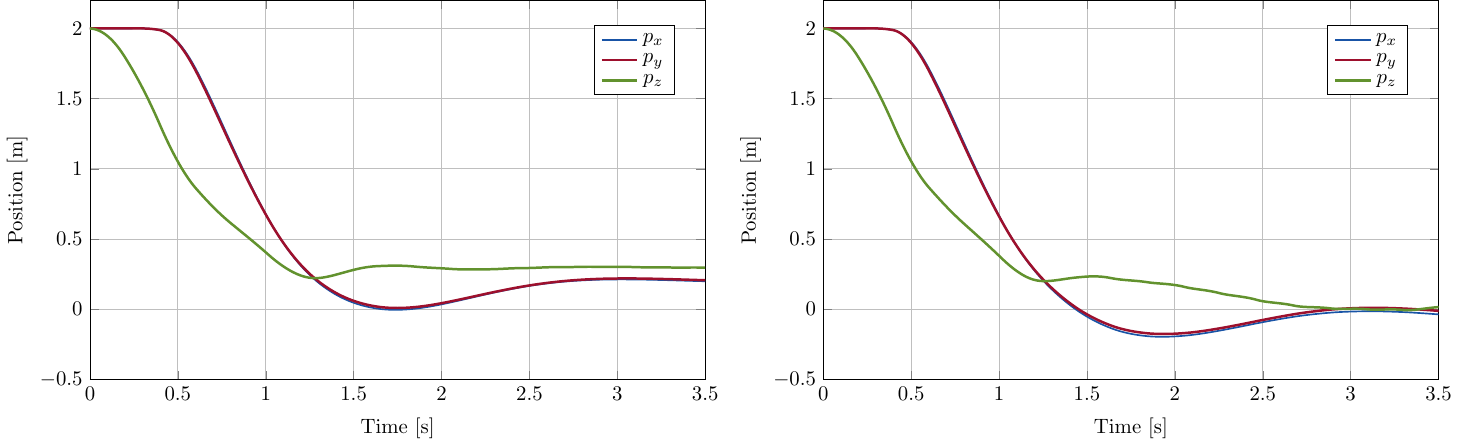}
    \caption{
    (left) Nominal MPC fails to compensate, leaving a $25$ cm error; (right) our algorithm compensates and lands.}
    \label{fig:landing}\vspace{-0.7cm}
\end{figure}
\section{Conclusion}\label{sec:conclusion}
This paper presented a meta-learning augmented MPC algorithm for disturbance-aware motion planning and control, guaranteed to improve performance with respect to desired state-input trajectories. 
The results highlight the importance of disturbance-aware planning for more accurate and reliable behavior. 
By incorporating disturbances into the planning loop, the algorithm improves robustness and trajectory tracking. Future work will focus on testing the control scheme in real-world experiments, extending it to explore other interaction-produced disturbances in unknown environments, and applying it to multi-agent model sharing.%

\vspace{-0.15cm}

\bibliographystyle{unsrt}
\bibliography{refs.bib}

\end{document}